  \providecommand\BibTeX{{%
    \normalfont B\kern-0.5em{\scshape i\kern-0.25em b}\kern-0.8em\TeX}}}
\newtheorem{proposition}{Proposition}
\newcommand{\cB}{\mathcal{B}}
\newcommand{\cA}{\mathcal{A}}
\DeclareMathOperator*{\argmax}{arg\,max\,}
\DeclareMathOperator*{\argmin}{arg\,min\,}
\mathchardef\mhyphen="2D
\DeclareRobustCommand\onedot{\futurelet\@let@token\@onedot}
\def\@onedot{\ifx\@let@token.\else.\null\fi\xspace}
\newcommand{\LinUCBPBMRank}{{\tt LinUCBPBMRank}}
\newcommand{\LinTSPBMRank}{{\tt LinTSPBMRank}}
\title[Learning to Rank in the Position Based Model with Bandit Feedback]{Learning to Rank in the Position Based Model \texorpdfstring{\\}{} with Bandit Feedback}
\author{Ermis Beyza, Patrick Ernst, Yannik Stein, Giovanni Zappella}
\email{{ermibeyz, peernst, syannik, zappella}@amazon.de}
\affiliation{
   \institution{Amazon}
   \city{Berlin}
   \country{Germany}
}
\begin{document}

\begin{abstract}
Personalization is a crucial aspect of many online experiences. In particular, content ranking is often a key component in delivering sophisticated personalization results.
Commonly, supervised learning-to-rank methods are applied, which suffer from bias introduced
during data collection by production systems in charge of producing the ranking.
To compensate for this problem, we leverage \textit{contextual multi-armed bandits}.
We propose novel extensions of two well-known algorithms viz. LinUCB
and Linear Thompson Sampling to the ranking use-case.
To account for the biases in a production environment, we employ the \textit{position-based click model}.
Finally, we show the validity of the proposed algorithms by conducting extensive offline
experiments on synthetic datasets as well as customer facing online A/B experiments.
\end{abstract}

\maketitle

\section{Introduction}
\label{sec:intro}
The content catalogue in many online experiences today is too large to be disseminated by regular customers.
To explore and consume these catalogues, content providers
often present a selected subset of their content which is personalized
for easier consumption.
For example, almost all major music streaming services rely on vertical tile
interfaces, where the user interface is subdivided into rectangular blocks,
vertically and horizontally. The content of every tile is a
graphical banner. Usually, customers observe a limited number of tiles,
that sometimes even rotate every few seconds, where only one large banner is visible at each point in time.\\
The selected tiles displayed to the customer significantly impact the engagement with the service.
Moreover, the order in which
they are presented by the application strongly impacts their chance of being observed by the customer.
This clearly calls for the need to consider the order as well as the bias introduced by the visualization mechanism.
Generally, the selection and ranking of content are core operations in most modern recommendation and personalization systems.
In this problem setting, we need to leverage all available information to improve the customer experience.\\
\textbf{Related Work}.
Learning-to-rank approaches have been studied in practical settings (e.g., see \cite{zalando})
and there is additional work to address the presence of incomplete feedback (also known as ``bandit'' feedback)
(e.g., \cite{multiplayclaire, kveton2015cascading, wen2015efficient, komiyama2017position}).
Learning-to-rank can be cast as a combinatorial learning problem where, given a set of actions,
the learner has to select the ordered subset maximizing its reward.
A standard combinatorial problem with bandit feedback (e.g., see
\cite{cesa2012combinatorial, combes2015combinatorial}) would provide a single
feedback (e.g., click/no-click signal) for each subset of selected actions or
tiles, making the problem unnecessarily difficult.
A more benign formulation is to look at the problem as a semi-bandit problem, where the learner can observe feedback for each action,
eventually transformed by a function of the actions position in the ranking.
Recently, several relevant methods have been proposed for this kind of problem:
non-contextual bandit methods such as \cite{multiplayclaire, luedtke2016asymptotically, komiyama2017position, lattimore2018toprank}
\emph{do not leverage side-information} about customers or content and thus do not present a viable solution for our problem setting.
Different approaches offer solutions using complex click models (i.e., the cascade model \cite{kveton2015cascading, zong2016cascading}),
which can be effective on applications like re-ranking of search results, but are complex to extend to consider other aspects like additional
elements on the page since in practice they are often controlled by different subsystems.

The approaches described in \cite{Diaz:2018, Lalmas:2019, Carterette:2019} share the same problem space as this work,
but target different aspects of the problem, such as fairness, reward models, and evaluations.
\\
\textbf{Contribution}.
The first contribution of this paper is two different contextual linear bandit methods
for the so called \textit{Position-Based Model}~(PBM)~\cite{chuklin2015click},
which are straightforward to implement, maintain and debug.
Second, we provide an empirical study on techniques to estimate the position bias during the learning process.

Specifically, we introduce new algorithms derived from LinUCB and
linear Thompson Sampling with Gaussian posterior, addressing the problem of
learning with bandit feedback in the PBM.
This model assumes that the probability of the customer interacting with a piece of content is a function of the relevance of that
content and the probability that the customer will actually inspect that content allowing the model to be used in various scenarios.
To the best of our knowledge, this is the first contextual bandit approach using PBM.
Finally, we show the validity and versatility of our approach by conducting extensive
experiments on experiments on synthetic datasets as well as customer facing online A/B experiments,
including lessons learned with anecdotal results.

\section{Problem Setup}
\label{sec:setup}
In the following we introduce the Position-Based Model (PBM) to distinguish rewards
for different ranking positions and afterwards the linear reward learning model.

\textbf{Position-Based Model}.
\label{sec:PBM}
PBM~\cite{craswell2008experimental,richardson2007predicting} is a click model where the probability of getting a click on an action
depends on both its relevance and position.
In this setting, each position is randomly observed by the user with some probability.
It is parameterized by both $L$ action dependent relevance scores, expressing the probability that an action is judged as relevant, and $L$ position dependent examination
probabilities $q \in \left[ 0,1 \right]^L$, where
$q_\ell$ denotes the examination probability that position $\ell$ was observed (also known as position bias).
The core assumption of PBM is that the events of an item being relevant and being observed are independent,
i.e. the probability of getting a click $C$ on action $a$ in position $\ell$ is:
$P(C=1 | x, a, \ell) = P(E=1 | \ell) P(R=1 | x, a)$.
Regardless of the items that are placed there: $q_\ell = P(E=1 | \ell) = P(C=1 | \ell)$,
we need to provide these parameters.
In Section~\ref{sec:pb_estimation}, we discuss how we derive these parameters.

\textbf{The Learning Model}.
\label{sec:learning_model}
We consider a linear bandit setting in which the taken action at each round is a
list of $L$ actions chosen from a given set $\{a_1, \ldots, a_K\}$ of size $K$. Accordingly, assuming a semi-bandit feedback, we receive a reward in the form of a list of feedbacks corresponding to each position of the recommended list.
At each round $t$ of the learning process, we obtain $K$ vectors in
$\mathds{R}^d$ that represent the available actions for the learner. We denote
these by $\cA_t=\{a_t^1,\ldots, a_t^K\}$ and the action
list selected at time $t$ will be denoted as $A_t = (A_t^1,\ldots,A_t^L)$,
where $A_t$ is a permutation of $\cA_t$.

The PBM is characterized by examination parameters $(q_\ell)_{1 \leq \ell \leq L}$, where $q_\ell$ is the probability that the user effectively observes the item in
position $\ell$. At round $t$, the selection $A_t$ is shown and the learner observes the complete feedback. However, the observation $Z_t^\ell$ at position $\ell$
is censored being the product of the examination variable $Y_t^\ell$ and the actual user feedback $C_t^\ell$ where $Y_t^\ell \sim \cB(q_\ell)$ and $C_t^\ell = {A_t^\ell}^T \theta + \eta_t^\ell$ with all $\eta_t^\ell$ being 1-subgaussian independent random variables. When the user considered the item in position $\ell$, $Y_t^\ell$ is unknown
to the learner and $C_t^\ell$ is the reward of the item shown in position $\ell$. Then, we can compute the expected payoff of each action in each position, conditionally
on the action:
$\mathds{E}[Z_t^\ell| A_t^\ell] = q_\ell {A_t^\ell}^T \theta$,
where $\theta \in \mathds{R}^d$ is the unknown model parameter. At each step $t$, the learner is asked to make a list of $L$ actions $A_t$ that may depend on
the history of observations and actions taken. As a consequence to this choice, the learner is rewarded with $r_{A_t}=\sum_{\ell=1}^L Z_t^\ell$, where $Z_t=(Z_t^1,\ldots,Z_t^L)=(C_t^1 Y_t^1,\ldots,C_t^L Y_t^L)$. The goal of the learner is to maximize the total reward $\sum_{t=1}^T r_{A_t}$ accumulated over the course of $T$
rounds.

\section{Ranking Algorithms}
\label{sec:ranking}
\begin{figure*}
\algrenewcommand{\algorithmiccomment}[1]{\hskip0.5em$\rightarrow$ #1}
\resizebox{0.49\textwidth}{!}{
\begin{minipage}[t]{0.49\textwidth}
  \begin{algorithm}[H]
  \caption{$\LinUCBPBMRank$}
  \label{alg:LinRankUCB}
  \begin{algorithmic}
    \State {\bf Input:} Position Bias Parameters $(q_1,\ldots, q_L)$,
    \\confidence level $\delta>0$,
    regularization $\lambda$.
  \For{$t=1,\ldots,T$}
    \State Get the contextualized actions $\cA_t$,
    \State Compute $\hat{\theta}_t$ as in Prop.~\ref{prop1} and for all $a \in \cA_t$,
    \[
    U_t(a) = a^T \hat{\theta}_t + \sqrt{f_{t,\delta} \Vert a \Vert_{V^{{-1}}}^2}
    \]
    \State Build Top-L action list
    \[
      A_t \in \argmax_{a \in \cA_t} \sum_\ell q_\ell U_t(a)
    \]
    \State (ties broken arbitrarily)
    \State Update $V_t \gets V_{t-1} + \sum_\ell q_\ell^2 A_t^\ell {A_t^\ell}^T$
    \State Receive feedback for round $t$
    \State Update $b_t \gets b_{t-1} + \sum_\ell q_\ell Y_t^\ell A_t^\ell$
    \vspace{0.2cm}
  \EndFor
  \end{algorithmic}
  \end{algorithm}
\end{minipage}
}
\resizebox{0.49\textwidth}{!}{
\begin{minipage}[t]{0.49\textwidth}
  \begin{algorithm}[H]
  \caption{$\LinTSPBMRank$}
  \label{alg:LinRankTS}
  \begin{algorithmic}
  \State{{\bf Input:} Position Bias Parameters $(q_1,\ldots, q_L)$,
    confidence\\level $\delta>0$, prior precision parameters $\alpha_0$ and $\beta_0$, so that\\$\sigma^2 \sim \mathcal{IG}(\alpha_0, \beta_0)$ and $p_0(\theta)=\mathcal{N}(0,\sigma I)$.}
  \For{$t=1,\ldots,T$}
    \State Get the contextualized actions $\cA_t$,
    \State Sample $\tilde{\theta}_t \sim p_{t-1}$
    \State Compute scores for all
    \[
    a \in \cA_t: s_t(a) = a^T \tilde{\theta}_t
    \]
    \State Build Top-L action list,
    \[
      A_t \in \argmax_{a \in \cA_t} \sum_\ell q_\ell s_t(a)
    \]
    \State Update $V_t \gets V_{t-1} + \sum_\ell q^2_\ell A_t^\ell {A_t^\ell}^T$
    \State Receive feedback for round $t$
    \State Update $b_t \gets b_{t-1} + \sum_\ell q_\ell Y_t^\ell A_t^\ell$
    \vspace{-0.1cm}
  \EndFor
  \end{algorithmic}
  \end{algorithm}
\end{minipage}
}
\end{figure*}
We now introduce two contextual bandit algorithms for learning to rank in the PBM. The first one is named \emph{LinUCB-PBMRank} that is a variation of LinUCB~\cite{abbasi2011improved,chu2011contextual,dudik2011efficient}, the contextual version of the optimistic approaches inspired by UCB1. The second algorithm, called \emph{LinTS-PBMRank},
is Bayesian approach to exploration and it is a variation of linear Thompson Sampling (LinTS)~\cite{agrawal2013thompson}.

\subsection{The Optimistic Approach: LinUCB-PBMRank}.\label{sec:linucbrank}
The LinUCB algorithm for contextual bandit problem for a single action case at each time $t$, obtains a least square estimator for $\theta$
using all past observations:
$  \hat{\theta}_t = \argmin_{\tilde{\theta} \in \mathbb{R}^d} \sum_{s=1}^{t-1} ( C_s - {A_s}^T \tilde{\theta})^2 + \lambda \|\tilde{\theta} \|^2 $.\\
%
We can now derive a conditionally unbiased estimator of the model parameter $\theta$ for the ranking case in the PBM as a least square solution of
    \label{eq:theta_hat}
 $   \hat{\theta}_t = \argmin \sum_{s=1}^{t-1} \sum_{\ell=1}^L ( Z_s^\ell - q_\ell {A_s^\ell}^T \tilde{\theta})^2 + \lambda \|\tilde{\theta} \|^2 $.
%
%
\begin{proposition}\label{prop1}
The solution to the convex optimization problem formulated above gives a closed form solution for the estimator $\hat{\theta}$:
\small
\begin{equation}
\hat{\theta}_t = V_t^{-1} b_t = \left( \sum_{\ell=1}^L q_\ell^2 V_t^\ell + \lambda I \right)^{-1} \left( \sum_{\ell=1}^L q_\ell b_t^\ell \right)
\label{eqn:linucb_theta}
\end{equation}
\normalsize
where
$\forall\, \ell \in [L], V_t^\ell = \sum_{s=1}^{t-1} A_s^\ell {A_s^\ell}^T \quad \text{and} \quad b_t^\ell = \sum_{s=1}^{t-1} Z_s^\ell A_s^\ell$.
\label{prop:prop1}
\end{proposition}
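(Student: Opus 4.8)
The plan is to treat the stated objective as a smooth, strictly convex quadratic in $\tilde\theta$ and to solve it via the first-order optimality condition. Writing $f(\tilde\theta) = \sum_{s=1}^{t-1}\sum_{\ell=1}^L \left(Z_s^\ell - q_\ell {A_s^\ell}^T\tilde\theta\right)^2 + \lambda\|\tilde\theta\|^2$, I would first observe that the regularizer with $\lambda>0$ renders $f$ strictly convex: its Hessian equals $2\left(\sum_\ell q_\ell^2 V_t^\ell + \lambda I\right)$, which is positive definite because each $V_t^\ell$ is a sum of outer products (hence positive semidefinite) and $\lambda I \succ 0$. This simultaneously guarantees that the unique stationary point is the global minimizer and that the matrix to be inverted is nonsingular, so the closed form is well defined.

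Next I would compute the gradient by the chain rule. Each inner term $\left(Z_s^\ell - q_\ell {A_s^\ell}^T\tilde\theta\right)^2$ contributes $-2 q_\ell A_s^\ell \left(Z_s^\ell - q_\ell {A_s^\ell}^T\tilde\theta\right)$, and the regularizer contributes $2\lambda\tilde\theta$. Setting $\nabla f = 0$ and dividing by two yields the normal equations
\[
\sum_{s=1}^{t-1}\sum_{\ell=1}^L q_\ell^2 A_s^\ell {A_s^\ell}^T\,\tilde\theta + \lambda\tilde\theta = \sum_{s=1}^{t-1}\sum_{\ell=1}^L q_\ell Z_s^\ell A_s^\ell .
\]

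Finally I would match both sides against the definitions in the statement. The left-hand design term factors through the position index as $\sum_\ell q_\ell^2 \left(\sum_s A_s^\ell {A_s^\ell}^T\right) = \sum_\ell q_\ell^2 V_t^\ell$, while the right-hand side becomes $\sum_\ell q_\ell \left(\sum_s Z_s^\ell A_s^\ell\right) = \sum_\ell q_\ell b_t^\ell$. Collecting these into $V_t = \sum_\ell q_\ell^2 V_t^\ell + \lambda I$ and $b_t = \sum_\ell q_\ell b_t^\ell$ gives $V_t\hat\theta_t = b_t$, and left-multiplying by $V_t^{-1}$ delivers the claimed formula. The argument is essentially routine; the only point demanding care is the asymmetric appearance of the position bias — a single factor $q_\ell$ on the feedback side $b_t^\ell$ but a squared factor $q_\ell^2$ on the design side $V_t^\ell$ — which arises precisely because each residual contributes one factor of $q_\ell$ through the gradient of the linear predictor, and the quadratic loss produces a second.
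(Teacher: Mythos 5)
Your proof is correct and follows essentially the same route as the paper's: compute the gradient of the regularized least-squares objective, set it to zero, and read off the normal equations $\left(\sum_\ell q_\ell^2 V_t^\ell + \lambda I\right)\hat\theta_t = \sum_\ell q_\ell b_t^\ell$. In fact your write-up is the cleaner of the two — the paper's displayed gradient contains typographical slips (a spurious factor of $Z_s^\ell$ multiplying the residual and an unbalanced parenthesis) — and your added observations on strict convexity and the positive definiteness of $V_t$, which the paper omits, are exactly what justifies uniqueness of the minimizer and the existence of $V_t^{-1}$.
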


\begin{proof}
Computing the gradient of the cost function is Eq~\ref{eqn:linucb_theta} and solving the equation leads to
\begin{align*}
    \sum_{s=1}^{t-1} 2 \sum_{\ell=1}^L q_\ell A_s^\ell Z_s^\ell \left( Z_s^\ell - q_\ell {A_s^\ell}^T \theta \right) + \lambda \theta & = 0 \\
    \sum_{\ell=1}^L q_\ell \sum_{s=1}^{t-1} A_s^\ell Z_s^\ell -
    \sum_{\ell=1}^{L} q_\ell^2 \sum_{s=1}^{t-1} Z_s^\ell A_s^\ell ({A_s^\ell}^T \theta + \lambda \theta  &=0
\end{align*}
that produce the stated solution.
\end{proof}

The pseudocode of LinUCB for ranking in the PBM is given in Algorithm~\ref{alg:LinRankUCB}.

\subsection{The Bayesian Approach: LinTS-PBMRank}.\label{sec:lintsrank}
From a Bayesian point of view, the problem can be formulated as a posterior estimation of the parameter $\theta$. Here, the true observations $Z_t^\ell$ is
replaced by its conditional expectation given the censored position variables $Y_t^\ell \sim \mathcal{B}(q_\ell)$. We introduce the filtration $ \mathcal{F}_t$ as
the union of history until time $t-1$, and the contexts at time $t$, $\mathcal{F}_t=(A_1,Z_1,\ldots,A_t)$ such that for all $t, \, \ell$, $\mathds{E}[Z_t^\ell | \mathcal{F}_t] = \mathds{E}[C_t^\ell | \mathcal{F}_t] \mathds{E}[Y_t^\ell | \mathcal{F}_t] = q_\ell ({A_t^\ell}^T\theta)$.
We present a fully Bayesian treatment of Linear  Thompson Sampling where we assume $\sigma^2$ follows an Inverse-Gamma distribution and $\theta$ follows
a multivariate Gaussian:
\begin{align*}
\sigma^2  \sim \mathcal{IG}(\alpha_0, \beta_0) := p_0(\sigma^2) \\
\theta  \sim \mathcal{N}(\theta_0, \sigma^2 V_0^{-1}):=p_0(\theta)  \\
Z_t^{\ell} | A_t^{\ell}, \theta, q_\ell, \sigma^2  \sim \mathcal{N}(q_\ell \theta^T A_t^{\ell}, \sigma^2)
\end{align*}
For the above model, the joint model posterior $p(\theta, \sigma^2 | \mathcal{F}_t)$ follows a Normal-Inverse-Gamma distribution.
We can compute the posterior of the full-Bayesian approach as follows:
\begin{align*}
p(\tilde{\theta} | \mathcal{F}_t) & \propto p_0(\sigma^2) p_0(\theta) \prod_{t=1}^T \prod_{\ell=1}^L p(Z_t^\ell | \theta_t, \mathcal{F}_t) \\
& \propto \exp \lbrace - \frac{1}{2 \sigma^2} \sum_{\ell=1}^L (Z_t^\ell - q_\ell {A_t^\ell}^T \theta_t)^T (Z_t^\ell - q_\ell {A_t^\ell}^T \theta_t) \rbrace  \\
& \exp \lbrace - \frac{1}{2 \sigma^2} (\theta_t - \theta_0)^T V_0^{-1} (\theta_t - \theta_0) \rbrace \lbrace (\sigma^2)^{-(\alpha_0+1)} \exp \left(-\frac{\beta_0}{\sigma^2}\right) \rbrace
\end{align*}
We rearrange the posterior to formalize the posterior mean $\theta_t$ and the variance $V_t^{-1}$ in closed form. First, we rewrite the quadratic terms
in the exponential as a quadratic form:
\begin{align*}
Q(\tilde{\theta}, \sigma^2) & = (Z_t^\ell - q_\ell {A_t^\ell}^T \theta_t)^T (Z_t^\ell - q_\ell {A_t^\ell}^T \theta_t) + (\theta_t - \theta_0)^T V_0^{-1} (\theta_t - \theta_0) \\
& = (\tilde{Z}_t^\ell - W \theta_t)^T (\tilde{Z}_t^\ell - W \theta_t)
\end{align*}
where
\begin{align*}
\tilde{Z}_t^\ell =  \left( \begin{matrix}  Z_t^\ell \\ V_0^{\frac{1}{2}} \theta_0 \end{matrix} \right) \qquad   \text{and}  \qquad
W = \left( \begin{matrix} q_\ell A_t^\ell \\ V_0^{\frac{1}{2}} \end{matrix} \right)
\end{align*}
In this case $V_t^{-1} = (W^T W)^{-1} = (q_\ell^2 {A_t^\ell}^T A_t^\ell + V_0^{-1})^{-1}$ and $\theta_t = \Sigma_t (W^T \tilde{Z}_t^\ell)
= V_t^{-1} (q_\ell {A_t^\ell}^T \tilde{Z}_t^\ell + V_0^{-1} \theta_0)$.
At each time $t$, we sample one vector from the posterior for each action to compute the scores. The parameters of this posterior in terms
of the parameters at time $t-1$ are analytically computed as:
\begin{equation*}
\begin{aligned}[c]
V_t &= (\sum_t q_\ell^2 A_t^\ell {A_t^\ell}^T + V_0)\\ 
\alpha_t &= \alpha_0 + \frac{t}{2}
\end{aligned}
\qquad \qquad
\begin{aligned}[c]
\theta_t &= V_t^{-1} b_t   \\
\beta_t &= \beta_0 + \frac{1}{2} (\eta_t - \theta_t^t b_t)
\end{aligned}
\end{equation*}
where
\begin{equation*}
\begin{aligned}[c]
V_0 &= \lambda I\\
b_t &= b_{t-1} + q_\ell Z_t^\ell A_t^\ell
\end{aligned}
\qquad \qquad \qquad \qquad
\begin{aligned}[c]
\eta_t &= \eta_{t-1} + \sum_{\ell}{Z_t^\ell}^2 \\
z_t &= V_{t-1}^{-1} q_\ell A_t^\ell
\end{aligned}
\end{equation*}

We can simply apply the Sherman-Morrison identity~\cite{sherman1950adjustment} that computes the inverse of the sum of an invertible matrix as the outer
product of vectors to improve computational efficiency. The linear Thompson Sampling to rank (LinUCB-PBMRank) is summarized in
Algorithm~\ref{alg:LinRankTS}.
For dense action vectors the above update schema is computed in $\mathcal{O}(d^2)$.

\section{Position Bias Estimation}
\label{sec:pb_estimation}
Accurate estimation of the position bias is crucial for unbiased learning-to-rank from implicit click data. We can provide these parameters either as fixed
or use an automatic parameter estimation method.
Using fixed hyperparameters in a production environment with many different use-cases
and continuously expanding use cases can be quite challenging
in terms of maintenance and scaling.
To avoid that, we evaluate three automatic estimation methods: \textit{i)} estimate using the click-through
rate (CTR) per position by updating them online after observing each record \textit{ii)} a supervised learning approach leveraging the Bayesian Probit
regression (PR) model and \textit{iii)} bias estimation using an expectation-maximization (EM) algorithm.\\

\subsection{CTR per position}.
One of the most commonly used quantities in click log studies is click-through rates (CTR) at different positions~\cite{chuklin2015click,joachims2017accurately}.
A common heuristic used in these cases is the rank-based CTR model where the click probability depends on the rank of the document
$P(C= 1 | \ell) = \rho_\ell$.
\normalsize
Given the click event is always observed, $\rho_\ell$ can be estimated using MLE. The likelihood for the parameter $\rho_l$ can be written as:
\small
\begin{equation}
\mathcal{L}(\rho_\ell) = \prod_{c_i \in S_c} \rho_l^{c_i} (1 - \rho_\ell)^{1-{c_i}}
\label{eqn:likelihood}
\end{equation}
\normalsize
where $S_c$ is the set of clicks and $c_i$ is the value of the click of the $i^{th}$ occurrence for position $\ell$. By taking the log of (\ref{eqn:likelihood}),
calculating its derivative and equating it to zero, we get the MLE estimation of the parameter $\rho_\ell$. In this case, it is the sample mean of $c_i$'s:
\small
\begin{equation}
\rho_\ell = \frac{\sum_{c_i \in S_c} c_i}{\mid S_c \mid}
\end{equation}
\normalsize
\subsection{Probit Regression Model}. The CTR-based method is very intuitive but does not consider actions' features and their probability of being clicked. Furthermore, it can incur in the same
bias-related problem of the naive rankers since the clicks will likely be more frequent towards the beginning of the ranking.
We aim to learn a mapping $x \rightarrow \left[0,1\right]$ from a set of
features $x$ to the probability of a click. Bayesian Linear Probit model is a generalized linear model (GLM) with a Probit link function. The sampling distribution is given by:
$P(C | \theta, x) := \Phi (C \cdot \theta^T x / \beta)$, where we assumed that $C$ is either 1 (click) or 0 (no click) and $\Phi$ is the cumulative density
function of the standard normal distribution: $\Phi(t) := \int_{-\infty}^{t} \mathcal{N}(s; 0,1) ds$. It serves as the link function that maps the output of the
linear model (sometimes referred to as the score) in $\left[-\infty, \infty\right]$ to a probability distribution in $\left[0,1\right]$ over the observed data, $C$.
The parameter $\beta$ scales the steepness of the inverse link function. \\
The function $P(C | \theta, x)$ is called likelihood as a function of $\theta$ and sampling distribution as a function of $C$; the latter is the generative model
of the data and is a proper probability distribution whereas the former is the weighting that the data, $C$, gives to each parameter. The model uncertainty
over the weight vector is captured in $P(\theta) = \mathcal{N}(\mu,\sigma^2)$. Given a feature vector $x$, the proposed sampling distribution together with
the belief distribution results in a Gaussian distribution over the latent score. Given the sampling distribution $P(C | \theta, x)$  and the prior $P(\theta)$,
the posterior is calculated: $P(\theta | C, x) := P(C | \theta, x) P(\theta)$. \\
We keep a Probit Regression (PR) model for each position $\ell$. Given the likelihood $P(C | x, a, \ell, \theta)$, the posterior is calculated as: \\
$P(\theta | C, x, a, \ell) := P(C | x, a, \ell, \theta) P(\theta)$. \\
Then, the predictive distribution $P(C | x, a, \ell)$ can be computed with given feature vector and
the posterior (See~\cite{graepel2010web} for details). As we mentioned in Section~\ref{sec:PBM}, the probability of getting a click on action $a$ in position
$\ell$ is equal to $P(C=1 | x, a, \ell) = P(E=1 | \ell) P(R=1 | x, a)$. Here, our goal is to compute $q_\ell = P(E=1 | \ell)$ and we compute it as:
\small
\begin{align*}
q_\ell = \frac{P(E=1 | \ell) P(R=1 | x, a)}{P(E=1 | \ell=1) P(R=1 | x, a)}
\end{align*}
\normalsize
where we assume $P(E=1 | \ell=1) = 1$.
It has to be noted that although this assumption holds for the applications considered in the experimental section of this paper, this is not guaranteed in all
real-world applications. It is possible that the content on the page may get reshuffled by another system and the position of the component visualizing the
ranking changed, with significant impact on the chance for the customer to observe the content.
In Section~\ref{sec:experiments}, we will provide experimental results where this assumption is violated.

\subsection{Expectation-Maximization}.
After the observations made for the CTR and PR estimators,
we to explore different directions in order to provide a solution
that can be more robust in
real-world scenarios.
The Expectation-Maximization (EM) algorithm can be applied to a large family of
estimation problems with latent variables. In particular, suppose we have
a training set $X=\{x_1, \dots , x_n\}$ consisting of $n$ independent examples.
We wish to fit the parameters of a model $P(X, Z)$ to the data, where the
likelihood is given by:
\small
\begin{align}
\mathcal{L}(\theta) & =  \sum_{i=1}^n \log P(X; \theta) =  \log \sum_Z P(X, Z; \theta)
\end{align}
\normalsize
However, explicitly finding the maximum likelihood estimates of the parameters $\theta$ may be hard. Here, the $z^{(i)}$'s are the unobserved latent random
variables. In such a setting, the EM algorithm gives an efficient method for maximum likelihood estimation. To maximize $\mathcal{L}(\theta)$, EM construct a
lower-bound on $\mathcal{L}$ (E-step), and then optimize that lower-bound (M-step) repeatedly.
The EM estimator provided in this section can be seen as a generalization of PR estimator, which should provide better practical performance. Given the relevance estimate $\gamma_{x,a}=P(R=1|x,a)$,
the position bias $q_l=P(E=1|l)$ where $P(C=1|x,a,l)=P(E=1|l)P(R=1|x,a)$,
and a regular click log $\mathcal{L} = \{(c, x, a, \ell)\}$, the log likelihood of generating this data is:
\begin{equation}
\log P(\mathcal{L}) = \sum_{(c, x, a, \ell) \in \mathcal{L}} c \log q_\ell \gamma_{x,a} + (1 - c) \log (1 - q_\ell \gamma_{x,a})
\end{equation}
The EM algorithm can find the parameters that maximize the log-likelihood of the whole data. In \cite{wang2018position}, the authors
introduced an EM-based method to estimate the position bias from regular production clicks. The standard EM algorithm iterates over the Expectation and
Maximization steps to update the position bias $q_\ell$ and the relevance parameter $\gamma_{x,a}$.
In this paper, we modify the standard EM and take $\gamma_{x,a}$ equal to $\sigma(A_t^T \hat{\theta}_t)$ at each step $t$ where $A_t$ is the contextualized
action. In this way, we take the context information into account. At iteration $t+1$, the Expectation step estimates the distribution of hidden variable $E$ and $R$
given parameters from iteration $t$ and the observed data in $\mathcal{L}$:
\begin{equation}
P(E=1 | c, x, a, \ell) = \frac{ (1 - \gamma_{x,a}^{(t)}) q_\ell^{(t)}}{1 - q_\ell^{(t)} \gamma_{x,a}^{(t)}}
\end{equation}
Fore more details we refer to \cite{wang2018position}.
The Maximization step updates the parameters using the quantities from the Expectation step:
\small
\begin{equation}
q_\ell^{(t+1)} = \frac{1}{T} \sum_{t} \left( c_\ell^{(t)} + (1 - c_\ell^{(t)}) \frac{ (1 - \gamma_{x,a}^{(t)}) q_\ell^{(t)}}{1 - q_\ell^{(t)} \gamma_{x,a}^{(t)}} \right)
\end{equation}
\normalsize
%

\label{sec:experiments}

In this section we provide a number of empirical results to demonstrate the advantages of the proposed algorithms compared to their ``naive'' counterparts and other baselines.
To this aim, we perform experiments on synthetic datasets with different variants of the position bias estimation in a controlled environemt showing differences that would not be possible to measure in an online environment.
We also tested our algorithms in online experiments against other baselines but in order to avoid the risk of a negative impact on the customer experience, we ran online experiments
comparing our two variants of the presented algorithms and two ``safe'' production-like baseliense.
Comparing our algorithms to baselines that provided negative results in the offline experiments
would be irresponsible and completely against the customers' interest.

\begin{table*}[h!]
\centering
 \begin{tabular}{r | c | c | c | c  }
                                                                 & \multicolumn{4}{| c }{Number of actions/positions}                       \\
\hline
  ALGORITHMS                                        &    1                                       &                   5                           &       10                                    &              20                              \\
\hline
  LinUCB                                                   &   48278.30 $\pm$ 5.99    &   69334.81 $\pm$ 144.02    &   68294.01 $\pm$ 171.0     &   65185.37  $\pm$ 415.50        \\
  LinUCB-PBMRank (Real)                      &   48274.81  $\pm$ 3.51     &   75800.33 $\pm$ 5.58       &   76296.08 $\pm$ 6.53      &   76307.25 $\pm$ 7.86             \\
  LinUCB-PBMRank (EM)                        &   48266.04 $\pm$ 10.63   &   74119.72 $\pm$  219.15    &   74364.61 $\pm$ 133.72   &   74567.43 $\pm$ 273.93        \\
  LinUCB-PBMRank (PR)                         &   48276.28 $\pm$ 3.51     &   75786.47 $\pm$ 22.45     &   76291.33  $\pm$  6.01      &   76102.86 $\pm$ 40.49         \\
  LinUCB-PBMRank (CTR)                       &   48247.31 $\pm$ 5.90    &    72104.12 $\pm$ 78.13      &    72981.72  $\pm$ 132.31   &   73610.40 $\pm$ 350.70       \\
\hline
  LinTS                                                       &   48518.36 $\pm$ 12.93    &  67234.41 $\pm$ 209.95  &  69363.75 $\pm$ 327.70   &   68054.39  $\pm$ 124.57      \\
  LinTS-PBMRank (Real)                          &   48559.73 $\pm$ 9.27     &  76194.49  $\pm$ 1.95       &  76709.01 $\pm$ 2.45        &   76803.95  $\pm$ 11.74         \\
  LinTS-PBMRank (EM)                            &   48153.47  $\pm$ 49.21   &  74992.00 $\pm$  91.78    &  75365.20  $\pm$ 167.04   &   76018.56  $\pm$  96.39        \\
  LinTS-PBMRank (PR)                             &   48559.73 $\pm$ 9.27     &  73690.61  $\pm$ 49.48    &  74670.29  $\pm$  55.35   &   75940.56  $\pm$  245.67       \\
  LinTS-PBMRank (CTR)                          &   48523.96 $\pm$ 6.31      &  74254.12  $\pm$ 23.55    &  74682.50  $\pm$ 186.22  &   73128.87   $\pm$ 312.81       \\
\hline
  Random Selection                                 &   45044.14 $\pm$ 11.60  &   70780.10 $\pm$ 20.29    &  71253.58  $\pm$ 21.74   &   71273.89  $\pm$  92.59           \\
 \end{tabular}
 \caption{Cumulative reward on \textbf{SINREAL}. \label{t:cum_real_reward}}
\end{table*}
\begin{table*}[h]
\centering
 \begin{tabular}{r | c | c | c | c  }
                                                                 & \multicolumn{4}{|  c }{Number of actions/positions}                       \\
\hline
  ALGORITHMS                                        &    1                   &          5           &   10                 &   20                      \\
\hline
  LinUCB                                                  &   34790.07  $\pm$ 5.55      &   49453.96 $\pm$ 430.82  &   50450.33  $\pm$ 65.66   &   20915.29  $\pm$  2165.77     \\
  LinUCB-PBMRank (Real)                     &   34682.12  $\pm$  31.15     &   53081.15  $\pm$ 148.92  &   53436.85  $\pm$  19.28   &   53538.65 $\pm$  344.03      \\
  LinUCB-PBMRank (EM)                       &   34568.03  $\pm$ 173.82   &   50562.10 $\pm$ 113.04   &   50069.72  $\pm$ 203.47  &   51397.31  $\pm$  221.55      \\
  LinUCB-PBMRank (PR)                        &   34584.33 $\pm$ 24.68     &   53178.26 $\pm$ 139.11    &   53352.61  $\pm$ 148.24   &   53470.40 $\pm$ 168.91       \\
  LinUCB-PBMRank (CTR)                     &   34552.33  $\pm$ 124.94   &   50029.49  $\pm$ 132.16   &  50570.73  $\pm$ 225.96   &  49521.61 $\pm$ 302.92       \\
\hline
  LinTS                                                     &   34850.33 $\pm$ 130.12   &  46402.29 $\pm$ 83.20    &   47707.79 $\pm$ 90.08     &  39598.70 $\pm$ 1228.26        \\
  LinTS-PBMRank (Real)                         &   34882.33 $\pm$ 117.58   &  53700.02  $\pm$ 62.98   &   53945.51 $\pm$ 73.62     &  53939.03 $\pm$  133.72        \\
  LinTS-PBMRank (EM)                           &   34795.66 $\pm$ 113.52  &  47921.58  $\pm$ 26.23    &   48176.78 $\pm$  235.97   &  51994.30 $\pm$  103.47       \\
  LinTS-PBMRank (PR)                            &   34882.33 $\pm$ 35.12   &  51877.87  $\pm$  53.50    &   52283.41 $\pm$ 126.40   &  52444.39 $\pm$  122.09       \\
  LinTS-PBMRank (CTR)                          &   34766.01 $\pm$ 171.48  &  47657.17  $\pm$ 58.34    &   46218.11 $\pm$ 189.21     &  45815.60 $\pm$  256.34       \\
\hline
  Random Selection                                 &   26721.66 $\pm$ 35.42  &  42139.62 $\pm$ 89.18   &   42389.31 $\pm$ 126.45  &    42552.10 $\pm$  85.47       \\
 \end{tabular}
 \caption{Cumulative reward on \textbf{SINBIN} datasets. \label{t:cum_bin_reward}}
\end{table*}
\section{Experiments}
\begin{table*}[t]
\centering
 \begin{tabular}{r |  r  | c | c | c | c}
\hline
                                                           &  ALGORITHMS
                                                           & 5 / SINBIN
                                                           & 5 / SINREAL
                                                           & 10 / SINBIN
                                                           & 10 / SINREAL \\
\hline
\multirow{3}{*}{$\epsilon$=0.1}     &  LinTS-PBMRank (Real)  &  48348.93 $\pm$ 61.61   & 68579.71 $\pm$ 14.53  & 48549.76 $\pm$ 87.88  & 69033.46 $\pm$ 43.39 \\
                                    &  LinTS-PBMRank (EM)    &  46175.04 $\pm$ 150.30  & 67962.91 $\pm$ 56.57  & 46574.36
$\pm$ 178.47 & 68027.64 $\pm$ 158.46 \\
                                    &  LinTS-PBMRank (PR)    &  47885.84 $\pm$ 234.51  & 66564.25 $\pm$ 50.06  & 47987.43
$\pm$ 267.57 & 67149.27 $\pm$ 302.14 \\
\hline
\multirow{3}{*}{$\epsilon$=0.25}   & LinTS-PBMRank (Real)    &  39538.48 $\pm$ 103.30  & 57147.24 $\pm$ 32.55  & 39787.77 $\pm$ 183.72  & 57528.76 $\pm$ 111.05  \\
                                   & LinTS-PBMRank (EM)      &  38164.60 $\pm$ 380.26  & 55250.56 $\pm$ 232.94 & 38649.44
$\pm$ 384.07  & 55949.08 $\pm$ 234.69 \\
                                   & LinTS-PBMRank (PR)      &  37821.41 $\pm$ 504.89  & 52039.96 $\pm$ 398.44 & 37838.06
$\pm$ 534.24  & 53135.97 $\pm$ 397.89 \\
\hline
\multirow{3}{*}{$\epsilon$=0.5}    & LinTS-PBMRank (Real)    &  25546.38 $\pm$ 109.99  & 38095.41 $\pm$ 43.26 & 25811.65
$\pm$ 197.66 & 38332.70 $\pm$ 50.64  \\
                                   &  LinTS-PBMRank (EM)     &  25051.16 $\pm$ 128.68  & 37153.18 $\pm$ 382.06 & 25336.41
$\pm$ 540.04 & 37637.62 $\pm$ 127.40  \\
                                   &  LinTS-PBMRank (PR)     &  23901.48 $\pm$ 408.30  & 34382.20 $\pm$ 313.71 & 23958.86
$\pm$ 708.51 & 34689.39 $\pm$ 432.71 \\

 \end{tabular}
 \caption{Cumulative reward on \textbf{SINBIN} when position bias is $\frac{1-\epsilon}{\exp(position)}$. \label{t:cum_pb_eps}}
\end{table*}

\begin{figure}[h!]
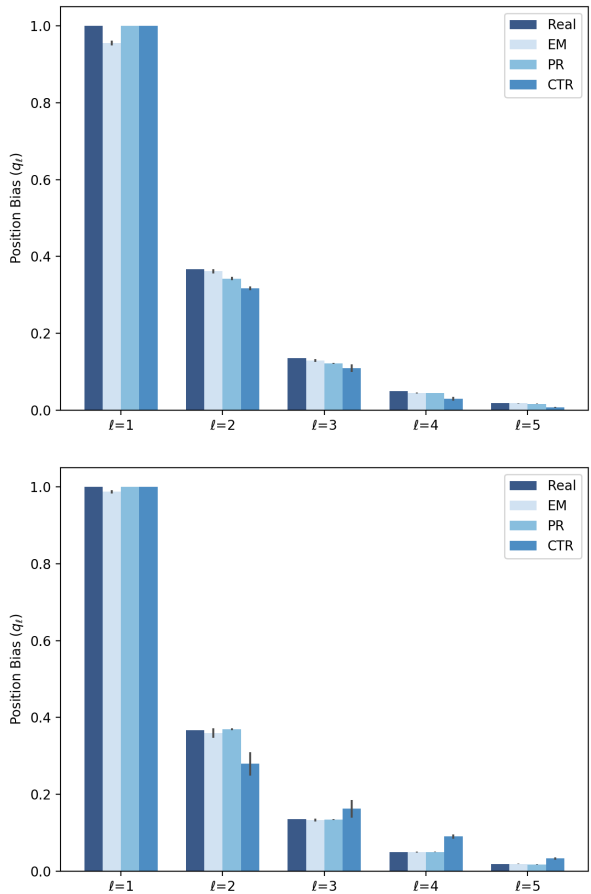

\hspace{-2mm}
\begin{subfigure}[b]{0.5\textwidth}
\centering
\includegraphics[scale = 0.50] {pb_bin_5slots.png}
\label{subfig:bin_pb_est5}
\end{subfigure}
\hspace{3mm}
\begin{subfigure}[b]{0.5\textwidth}
\centering
\includegraphics[scale = 0.50] {pb_real_5slots.png}
\label{subfig:real_pb_est5}
\end{subfigure}
\caption{\footnotesize{Comparison of the real position biases and the position biases estimated by CTR, PR and EM methods for the top 5 positions. SINBIN on the left and SINREAL on the right.}}
\label{fig:estimationComp5}
\vspace*{-2mm}
\end{figure}
\subsection{Offline Experiments}
\label{sec:offline_experiments}
For the purpose of testing our algorithms in a controlled environment, we created two
synthetic datasets with 25 available actions and we limited the algorithm to select
a maximum of 20 actions, simulating the behavior of a page that does not display
all the actions to all the customers.
The actions vectors were generated as in the following: we fixed the number of dimensions to 5 and then generated dense vectors of random numbers
in [0,1) and then set all the entries having a value below 0.1 to 0.0 (introduces some sparsity).
The context vectors, part of the same datasets, are set to have 10 dimensions and generated in the same way of the actions.\\
A simplified version of the behaviour of the production system is reproduced in the offline experiments,
so we join the action and context vectors to create a \textit{contextualized action}.
This is created by concatenating the action vector, the context vector and the
vectorized outer product of the two. This process generates 25 vectors, each
one representing an action and each vector is made of three blocks: the action
vector, the context vector, and the cross product between the action vector and
the context vector.
After the vectors are generated, they get normalized by dividing them by their respective square norms.
The only vectors received by the predictors are the ones made available at the end of this process.\\
For the dataset with \textbf{real valued rewards} (later called \textbf{SINREAL}), the rewards are generated as follows: at the beginning of the process a
unit length random vector $w$ is fixed and $w$ will be used to compute the inner product with the contextualized actions, following the linear assumption
made in  Section~\ref{sec:learning_model}.
The reward is generated by summing the inner product between $w$ and the contextualized action vector with a noise factor uniformly sampled in the interval
[-0.1,0.1). Then, we apply floor and ceiling operations to make sure to obtain a reward in [0,1]. In the case of the dataset with \textbf{binary valued rewards} (later
called \textbf{SINBIN}), the same procedure is followed but we binarize the rewards by  thresholding with a predefined hyperparameter.
%
Before providing the rewards to update the predictor, the rewards are divided by the exponential of the position assigned to the corresponding
action by the learning algorithm (this is done ``online'' and depends on the predictions made by the algorithm).
The aim is to mimic the behavior observed in online experiments where the users tend to click significantly more on the
top positions on the ranking. The exponential function was chosen after observing the behavior of customers in some online experiments.\\
\subsubsection{Results}
In our experiments, we compared the two algorithms presented in Section~\ref{sec:linucbrank}  with their counterparts that do
not account for the bias introduced by the ranking position namely LinTS and LinUCB. These algorithms select the actions taking the top-K with the highest
scores instead of the single best one as in their original definition. The update operation is performed using all the selected actions and the corresponding
rewards without any re-weighting. This is equivalent to set all the $\{q_\ell\}_{\ell=1}^L$ to 1 in the algorithms referenced above.\\
\textbf{Synthetic Data Results}.
Tables~\ref{t:cum_real_reward} and \ref{t:cum_bin_reward} report the results
of experiments run on
synthetic data in order to validate our ideas in a controlled environment.
The dataset used in this section are SINREAL and SINBIN, whose details are available in Section~\ref{sec:offline_experiments}.
Please note that the since the datasets are generated artificially,
every potential prediction of the algorithm can receive the correct reward and
we do not need to employ techniques for running offline evaluation with
biased datasets (e.g., \cite{li2018offline}).
In these offline experiments, we can observe two important trends:
i) not addressing the position bias can significantly mislead algorithms to the point that they can become worse than a random selection,
ii) using an automatic method for estimating the position bias gives a clear advantage but there is no clear winner between PR and EM.\\
\textbf{Position Bias Estimation Results}.
The previous experiments show that CTR is inferior as position bias
estimation method, while PR and EM perform almost equally.
In Figure~\ref{fig:estimationComp5} we compare the quality of the
estimation methods by comparing the esimated position biases with the true
values observed in the synthetic datasets.
However, it is important to recall that for the CTR and PR estimators
the parameter for the first position is artificially set to 1,
while the EM method is performing its estimation without any additional information.
This is particularly useful in cases where the hyperparameter associated
with the first position is unknown because it is controlled by external factors (e.g., the ranked content
is displayed in a position where deos not catch the attention of the users).
We conducted a range of experiments, reported in Table~\ref{t:cum_pb_eps},
to assess the sensitivity of PR with respect to this parameter.
The results clearly show that the more severe the violation of the $q_1 =1$ becomes, the better EM becomes compared to PR.
\subsection{Online A/B Experimentation}
\begin{figure}
    \centering
    \begin{minipage}{0.5\textwidth}
        \centering
        \includegraphics[width=0.95\textwidth]{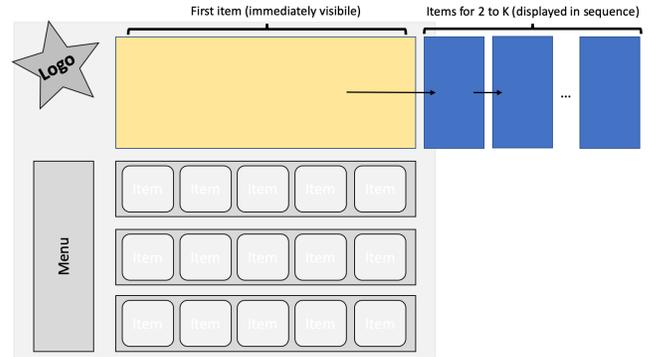} 
        \caption{Structure of the page where the experiment was ran. The central-top slot is optimized using the methods described in this paper.
        All the items in the list get eventually displayed on the page since the slot is automatically loading the next piece of content
        after a fixed amount of time. }
        \label{audiobooktrends}
    \end{minipage}
\end{figure}

To validate our offline results and to show the effectiveness of our approach in a real-world
scenario, we conducted two end-customer facing online A/B tests. Due to the costs
and potential negative customer experience of running A/B tests involving real
paying customers, we focused on two main scenarious.
In each scenario we pick one widget, which is a so-called carousel\footnote{
  A carousel consists of a list of
banners, where only one banner is displayed at a time and rotated to the next one
after a certain time period.
}
UI and is embedded at the top of the landing page of a large music streaming service.

We alter the arrangement of the list items
between control A and treatment B to test different baselines against
configurations of our bandit-based ranking approach.
Particularly we test:
\begin{itemize}
  \item a human-curated list arrangement in control against our approach
  with fixed position biases, i.e. without online automatic estimation
  \item a collaborative filtering based ranking in control against our approach
  with online EM position bias estimation as treatment
\end{itemize}
The customers are split equally, 50\%/50\% random allocation, between control and
treatment.

\subsubsection{Online Learning to Rank A/B Test vs. Human-curated content}
\label{sec:experiment:humancurated}
In this experiment the goal is to have a confined test for the bandit learning
to rank algorithm and thus we purposefully do not include automatic position
bias estimation. Instead, we rely on manual hyper parameters based on
view events, where the parameter for position $i$ is based on the number of
historical customer requests that viewed $i$ divided by the number of requests.
The candidate widget consists of $50$ candidate items,
which were represented by banners containing music spanning different genres and user tastes (e.g., audio books, music for children).
Our control treatment always shows the same order of $13$ manually curated items
to customers. In treatment, we apply our ranking bandit to contextually rerank
the candidate set every time the customer visits the landing page. We pick the
top-13 scored banners to fill the carousel and present them to the customer.
To contextualize the ranking, we leverage different types of features representing the customer, content, and general context,
such as temporal information, customer taste profiles and customer affinities towards musical content.
We see major increases of various classical ranking measurement and engagement metrics
in treatment which leverages the ranking bandit.
Overall, here customers interacted more with the widget and also consumed more music.
In particular, if we compare the performance of the widget
with the version provided to the control group, 
we are able to improve the following widget specific metrics:
\begin{itemize}
  \item the mean reciprocal rank (MRR) increased by $15.38\%$,
  \item the amount of attributed playbacks increased by $17.16\%$,
  \item the listening duration measured in log seconds increased by $16.90\%$,
  \item the number of customers playing music increased by $15.62\%$.
\end{itemize}
All results are statistical significant with p-values below $0.001$.
These positive results are also found in the performance of the landing page
which contains the widget:
\begin{itemize}
  \item there is $2.33\%$ more playbacks originating from the landing page,
  \item the listening duration measured in log seconds increased over all customers increased by $3.04\%$,
  \item the number of customers who played music increased by $2.81\%$.
\end{itemize}
All results are statistical significant with p-values below $0.001$.
We also tracked the position of each banner in the carousel during the course of
this experiment, which revealed that the approach is able to respond to
intra-day short-term trends. For instance, specific types of contents are
popular
only at some time of the day and the algorithm is able to learn that. Such a case is
depicted in Figure~\ref{nichegenre}, which plots the average ranks over time.
As we can see, there is specific content which is popular during night time and 
the ranking bandit is able to capture intra-day trends thanks
to temporal features provided as part of the context and fast model updates.
Additionally, we observed that the ranking bandit was able to handle seasonal
content: an example is shown in Figure~\ref{fathersday}, which shows the average position
of a banner targeted to Father's day (celebrated on May 30th) in Germany that was
ranked high in the days leading to the holiday.

\begin{figure}
    \centering
    \begin{minipage}{0.5\textwidth}
        \centering
        \includegraphics[width=0.95\textwidth]{radio-play.png} 
        \caption{Intra-day trends for audio content of niche genre.}
        \label{nichegenre}
    \end{minipage}\hfill
    \begin{minipage}{0.5\textwidth}
        \centering
        \includegraphics[width=0.95\textwidth]{fathers-day.png} 
        \caption{Intra-day and intra-week trend for an item regarding music for
        Father's day. There is an evident trend in the days leading to the
        holiday where the item becomes more popular.}
        \label{fathersday}
    \end{minipage}
\end{figure}

\subsubsection{Online Learning to Rank A/B Test vs. Matrix factorization baseline}
In this experiment, we tested the ranking bandit with position
bias estimation against a
matrix factorization baseline on the carousel widget during summer 2019 over
8 days in the US\@. The carousel contained
10--15 banners that were manually curated and changed over the time of the
experiment. In the control group, the banners were ordered by scores derived
from an existing production system that is based on matrix factorization. In the
treatment, we applied the ranking bandit with position bias estimation. To contextualize,
we used temporal features, as well as several features to represent the customer
such as the customer's taste profile and the scores from the matrix
factorization baseline.

Overall, we saw an increased customer engagement in treatment compared to
control. In particular, we saw improvements in
the treatment along the following metrics for the targeted widget:

\begin{itemize}
  \item the mean reciprocal rank (MRR) increased by $5.08\%$,
  \item the attributed playbacks increased by $7.57\%$,
  \item the listening duration measured in log seconds increased by $7.23\%$,
  \item the number of customers playing music from this widget increased by $6.72\%$,
\end{itemize}
All results are statistical significant with p-values below $0.001$.
We also improved metric for the whole landing page that contains the widget:
\begin{itemize}
  \item the number attributed playbacks increased by $0.8\%$ with p-value=$0.075$
  \item the listening duration measured in log seconds increased by $0.96\%$.
  \item the number of customers who played music increased by $0.92\%$.
\end{itemize}

All results for which statistical significance is not specified
were significant with p-values below $0.001$.
Finally, we observed similar to Section~\ref{sec:experiment:humancurated} that
the ranking bandit was able to capture intra-day trends, where it ranked a summer
playlist higher during the day and evening than at night, and sudden customer trends,
where it learned within 1 day to rank high a banner featuring a new track by
a famous American artist. In both cases, the matrix factorization baseline missed these
trends. See figures~\ref{fig:summer_playlist} and~\ref{fig:singerUS} for more
details.

\begin{figure}
    \centering
    \begin{minipage}{0.5\textwidth}
        \centering
        \includegraphics[width=0.75\textwidth]{summer_playlist.png}
        \caption{Intra-day trends for summer playlist in the beginning of July.
        The ranking bandit is in blue and the baseline recommender in green.
        The ranking bandit is able to catch the general trend earlier than the
        baseline recommender and also to follow the intra-day fluctuations.}
        \label{fig:summer_playlist}
    \end{minipage}\hfill
    \begin{minipage}{0.5\textwidth}
        \centering
        \includegraphics[width=0.75\textwidth]{singerUS.png}
        \caption{Trend for for banner featuring a new track by a well-known American singer.
        The ranking bandit is in green and the baseline recommender in pink.
        As it often happens recently released content by popular artists catches
        the attentions of customers outside the core artist fan base. In this
        plot it is evident that the ranking bandit catches the trend much
        earlier than the baseline recommender.}
        \label{fig:singerUS}
    \end{minipage}
\end{figure}

\section{Lessons learned}
\label{sec:lessons}
While most of the risks where mitigated before the deployment
and everything move quite smoothly, there are a few facts which
we considered surprising.

\subsection{Usage of one-hot encoding} 
We developed methods which leverage ``contextualized actions'' allowing us to
perform an extensive amount of features engineering. In this way
we can leverage highly non-linear model trained on historical information
to produce high-quality features. In the online experiments we reported
in this paper, we used the our system to re-rank a very small pool
of items (represented by a large image) linked to a piece of musical
content. Turns out that the one-hot-encoding representation of the items
combined with the context by the mean of the cross product and a
non-linear dimensionality reduction technique performed very well.
We do not have a scientific explanation of the reasons behind this success,
but we conjecture that the visual aspect of the items plays a crucial role
which is hard to capture in a small set of visual features. 
Moreover, the small content pool compared to the number of requests served
allows the algorithm to converge quickly also without information about
the similarity between the actions.
To verify the contribution of the visual aspects to customers' decisions and
the best way to encode the visual representation of the images associated
to musical items is left as future work.

\subsection{Position bias estimation}
As reported in the previous section, we tested the Thompson Sampling ranking algorithm
online in combination with the automatic position bias estimation leveraging
expectation maximization (previously called LinTS-PBMRank(EM)).
While we obtained positive results in the online experiment, we observed an unexpected
behaviour in the probabilities computed by the EM algorithm which could have been related
to numerical stability issues and further investigated the matter. 
We decided to run a new online experiment where LinTS-PBMRank(EM) was compared
with an instance of the same algorithm whose position bias probabilities where 
manually tuned leveraging historical data. 
This experiment terminated with a significant victory (about 5\% increase in MRR) 
of the algorithm 
using manually tuned position bias.
Re-applying offline part of the updates to the model, we noticed that even
using a consistent number of updates, in the order of $10^5$, the posteriors means
of the two models where not converging to the same value. Specifically, their cosine
similarity was in the interval (0.6, 0.8).
This is due to two main reasons: i) the random initialization of the EM model and 
ii) the error made by the predictor in estimating the rewards. 
We decided to change the initialization of the EM model to $\frac{1}{\ell+\epsilon}$
where $\epsilon$ is just a small random number (e.g., in (0, 0.1)).
The same offline analysis described above provide significantly different results
using this initialization, with an average cosine similarity of the posterior means at 0.93 
and negligible variance.
We tested a few others initializations techniques offline with slightly worse but
comparable results and we are waiting to validate our findings in online experiments.

\section{Conclusions and Future Work}
\label{sec:conc}
We provided extensions of two well-known contextual bandit algorithms that show a significant empirical advantage in real-world scenarios.
Our online experiments were run on a large scale music streaming service show a significant customer impact
measured by a few different metrics. Moreover, the presented algorithms proved themselves easy to maintain in a production environment.

There are a few directions in which we are considering extending these ranking solutions:
i) perform additional experiment on the most effective representations to be 
used for music recommendations in visual clients, ii) scale known techniques~\cite{gentile2014online,gentile2017context}
for the multi-bandit setting to support a massive number of customers
iii) compare our results with the ones obtained by more complex solutions based on complex
reinforcement learning algorithms.

\section{Acknowledgements}
We would like to thank Claire Vernade for the contributions made
during the inital stage of this project.

\bibliographystyle{plain}
\bibliography{Paper}

\begin{thebibliography}{10}

\bibitem{abbasi2011improved}
Yasin Abbasi-Yadkori, D{\'a}vid P{\'a}l, and Csaba Szepesv{\'a}ri.
\newblock Improved algorithms for linear stochastic bandits.
\newblock In {\em Advances in Neural Information Processing Systems}, pages
  2312--2320, 2011.

\bibitem{agrawal2013thompson}
Shipra Agrawal and Navin Goyal.
\newblock Thompson sampling for contextual bandits with linear payoffs.
\newblock In {\em ICML '13}, pages 127--135, 2013.

\bibitem{cesa2012combinatorial}
Nicolo Cesa-Bianchi and G{\'a}bor Lugosi.
\newblock Combinatorial bandits.
\newblock {\em Journal of Computer and System Sciences}, 78(5):1404--1422,
  2012.

\bibitem{chu2011contextual}
Wei Chu, Lihong Li, Lev Reyzin, and Robert Schapire.
\newblock Contextual bandits with linear payoff functions.
\newblock In {\em Proceedings of the 14. International Conference on Artificial
  Intelligence and Statistics}, pages 208--214, 2011.

\bibitem{chuklin2015click}
Aleksandr Chuklin, Ilya Markov, and Maarten~de Rijke.
\newblock Click models for web search.
\newblock {\em Synthesis Lectures on Information Concepts, Retrieval, and
  Services}, 7(3):1--115, 2015.

\bibitem{combes2015combinatorial}
Richard Combes, Mohammad Sadegh Talebi~Mazraeh Shahi, Alexandre Proutiere,
  et~al.
\newblock Combinatorial bandits revisited.
\newblock In {\em Advances in Neural Information Processing Systems}, pages
  2116--2124, 2015.

\bibitem{craswell2008experimental}
Nick Craswell, Onno Zoeter, Michael Taylor, and Bill Ramsey.
\newblock An experimental comparison of click position-bias models.
\newblock In {\em Proceedings of the 2008 international conference on web
  search and data mining}, pages 87--94. ACM, 2008.

\bibitem{Lalmas:2019}
Paolo Dragone, Rishabh Mehrotra, and Mounia Lalmas.
\newblock Deriving user- and content-specific rewards for contextual bandits.
\newblock In {\em The World Wide Web Conference}, WWW ’19, page 2680–2686,
  New York, NY, USA, 2019. Association for Computing Machinery.

\bibitem{dudik2011efficient}
Miroslav Dudik, Daniel Hsu, Satyen Kale, Nikos Karampatziakis, John Langford,
  Lev Reyzin, and Tong Zhang.
\newblock Efficient optimal learning for contextual bandits.
\newblock In {\em Proceedings of the 27. Conference on Uncertainty in
  Artificial Intelligence}, UAI '11, pages 169--178, Arlington, Virginia,
  United States, 2011. AUAI Press.

\bibitem{zalando}
Antonino Freno.
\newblock Practical lessons from developing a large-scale recommender system at
  {Z}alando.
\newblock In {\em RecSys '17}, pages 251--259, New York, NY, USA, 2017. ACM.

\bibitem{gentile2017context}
Claudio Gentile, Shuai Li, Purushottam Kar, Alexandros Karatzoglou, Giovanni
  Zappella, and Evans Etrue.
\newblock On context-dependent clustering of bandits.
\newblock In {\em Proceedings of the 34th International Conference on Machine
  Learning-Volume 70}, pages 1253--1262. JMLR. org, 2017.

\bibitem{gentile2014online}
Claudio Gentile, Shuai Li, and Giovanni Zappella.
\newblock Online clustering of bandits.
\newblock In {\em International Conference on Machine Learning}, pages
  757--765, 2014.

\bibitem{graepel2010web}
Thore Graepel, Joaquin~Quinonero Candela, Thomas Borchert, and Ralf Herbrich.
\newblock Web-scale bayesian click-through rate prediction for sponsored search
  advertising in {M}icrosoft's bing search engine.
\newblock In {\em ICML '10}, 2010.

\bibitem{Carterette:2019}
Alois Gruson, Praveen Chandar, Christophe Charbuillet, James McInerney,
  Samantha Hansen, Damien Tardieu, and Ben Carterette.
\newblock Offline evaluation to make decisions about playlist recommendation
  algorithms.
\newblock In {\em Proceedings of the Twelfth ACM International Conference on
  Web Search and Data Mining}, WSDM ’19, page 420–428, New York, NY, USA,
  2019. Association for Computing Machinery.

\bibitem{joachims2017accurately}
Thorsten Joachims, Laura Granka, Bing Pan, Helene Hembrooke, and Geri Gay.
\newblock Accurately interpreting clickthrough data as implicit feedback.
\newblock In {\em ACM SIGIR Forum}, volume~51, pages 4--11. ACM, 2017.

\bibitem{komiyama2017position}
Junpei Komiyama, Junya Honda, and Akiko Takeda.
\newblock Position-based multiple-play bandit problem with unknown position
  bias.
\newblock In {\em Advances in Neural Information Processing Systems}, pages
  4998--5008, 2017.

\bibitem{kveton2015cascading}
Branislav Kveton, Csaba Szepesv\'{a}ri, Zheng Wen, and Azin Ashkan.
\newblock Cascading bandits: Learning to rank in the cascade model.
\newblock In {\em ICML'15}, pages 767--776, 2015.

\bibitem{multiplayclaire}
Paul Lagr\'{e}e, Claire Vernade, and Olivier Cappe.
\newblock Multiple-play bandits in the position-based model.
\newblock In D.~D. Lee, M.~Sugiyama, U.~V. Luxburg, I.~Guyon, and R.~Garnett,
  editors, {\em Advances in Neural Information Processing Systems 29}, pages
  1597--1605. Curran Associates, Inc., 2016.

\bibitem{lattimore2018toprank}
Tor Lattimore, Branislav Kveton, Shuai Li, and Csaba Szepesv{\'a}ri.
\newblock Toprank: A practical algorithm for online stochastic ranking.
\newblock In {\em NeurIPS}, 2018.

\bibitem{li2018offline}
Shuai Li, Yasin Abbasi-Yadkori, Branislav Kveton, S~Muthukrishnan, Vishwa
  Vinay, and Zheng Wen.
\newblock Offline evaluation of ranking policies with click models.
\newblock In {\em Proceedings of the 24th ACM SIGKDD International Conference
  on Knowledge Discovery \& Data Mining}, pages 1685--1694. ACM, 2018.

\bibitem{luedtke2016asymptotically}
Alexander~R. Luedtke, Emilie Kaufmann, and Antoine Chambaz.
\newblock Asymptotically optimal algorithms for budgeted multiple play bandits.
\newblock Preprint (https://hal.archives-ouvertes.fr/hal-01338733), 2017.

\bibitem{Diaz:2018}
Rishabh Mehrotra, James McInerney, Hugues Bouchard, Mounia Lalmas, and Fernando
  Diaz.
\newblock Towards a fair marketplace: Counterfactual evaluation of the
  trade-off between relevance, fairness \& satisfaction in recommendation
  systems.
\newblock In {\em Proceedings of the 27th ACM International Conference on
  Information and Knowledge Management}, CIKM ’18, page 2243–2251, New
  York, NY, USA, 2018. Association for Computing Machinery.

\bibitem{richardson2007predicting}
Matthew Richardson, Ewa Dominowska, and Robert Ragno.
\newblock Predicting clicks: estimating the click-through rate for new ads.
\newblock In {\em Proceedings of the 16th international conference on World
  Wide Web}, pages 521--530. ACM, 2007.

\bibitem{sherman1950adjustment}
Jack Sherman and Winifred~J. Morrison.
\newblock Adjustment of an inverse matrix corresponding to a change in one
  element of a given matrix.
\newblock {\em The Annals of Mathematical Statistics}, 21(1):124--127, 1950.

\bibitem{wang2018position}
Xuanhui Wang, Nadav Golbandi, Michael Bendersky, Donald Metzler, and Marc
  Najork.
\newblock Position bias estimation for unbiased learning to rank in personal
  search.
\newblock In {\em Proceedings of the 11. ACM International Conference on Web
  Search and Data Mining}, pages 610--618. ACM, 2018.

\bibitem{wen2015efficient}
Zheng Wen, Branislav Kveton, and Azin Ashkan.
\newblock Efficient learning in large-scale combinatorial semi-bandits.
\newblock In {\em ICML '15}, pages 1113--1122, 2015.

\bibitem{zong2016cascading}
Shi Zong, Hao Ni, Kenny Sung, Nan~Rosemary Ke, Zheng Wen, and Branislav Kveton.
\newblock Cascading bandits for large-scale recommendation problems.
\newblock In {\em Proceedings of the 32nd Conference on Uncertainty in
  Artificial Intelligence}, UAI '16, pages 835--844, Arlington, Virginia,
  United States, 2016. AUAI Press.

\end{thebibliography}

\newpage
\appendix

\end{document}